\newcommand{\R}{\mathbb{R}}
\newcommand{\E}{\mathbb{E}}
\newcommand{\grad}{\nabla}
\newcommand{\KL}{\mathrm{KL}}
\newcommand{\dd}{\mathrm{d}}
\newcommand{\Cov}{\mathrm{Cov}}
\providecommand{\defeq}{\triangleq}
\DeclarePairedDelimiterX{\ip}[2]{\langle}{\rangle}{#1,\,#2}
\declaretheorem[style=thmstyle, numberwithin=section]{theorem}
\declaretheorem[style=thmstyle, sibling=theorem]{lemma}
\declaretheorem[style=defstyle, sibling=theorem]{definition}
\declaretheorem[style=defstyle, sibling=theorem]{remark}
\title{Equilibrium Propagation Without Limits}
\author{Elon Litman \textsuperscript{1}}
\affiliation{\textsuperscript{1} Stanford University}
\runningauthor{Elon Litman}
\runningtitle{Equilibrium Propagation Without Limits}
\abstract{We liberate Equilibrium Propagation (EP) from the limit of infinitesimal perturbations by establishing a finite-nudge foundation for local credit assignment. By modeling network states as Gibbs-Boltzmann distributions rather than deterministic points, we prove that the gradient of the difference in Helmholtz free energy between a nudged and free phase is exactly the difference in expected local energy derivatives. This validates the classic Contrastive Hebbian Learning update as an exact gradient estimator for arbitrary finite nudging, requiring neither infinitesimal approximations nor convexity. Furthermore, we derive a generalized EP algorithm based on the path integral of loss-energy covariances, enabling learning with strong error signals that standard infinitesimal approximations cannot support.
}
\begin{document}
	\maketitle

\section{Introduction}

Backpropagation provides an efficient solution to the credit assignment problem in layered and recurrent networks \cite{werbos1974beyond,rumelhart1986learning} and underpins most contemporary applications of deep neural networks \cite{lecun2015deep,krizhevsky2012imagenet,he2016deep,silver2016mastering}. At the same time, the algorithm relies on a dedicated backward pass that transports errors through the network using the exact transpose of the forward weights. This weight transport requirement and the associated non-locality are widely regarded as biologically implausible and difficult to reconcile with the constraints of real neural circuits \cite{crick1989recent,grossberg1987competitive,bengio2015towards,lillicrap2020backpropagation,whittington2019theories}. Empirical and theoretical work in neuroscience instead points to synaptic plasticity rules that depend on locally available variables such as pre and postsynaptic activity \cite{hebb1949organization}, their timing \cite{gerstner2002spiking,caporale2008spike}, and possibly a small number of modulatory signals \cite{urbanczik2014learning}.

This tension has motivated a broad search for learning rules that are both powerful and local. Proposals include multivariate Hebbian and three-factor rules, feedback alignment and its variants \cite{lillicrap2016random,nokland2016direct}, target propagation \cite{lee2015difference}, and predictive coding style architectures that attempt to approximate error backpropagation with local computations \cite{whittington2017approximation,millidge2020predictive,bogacz2017tutorial}. While these approaches relax strict weight symmetry and can achieve reasonable performance, they often lack a clean global objective, or they rely on auxiliary mechanisms whose physical or biological status is unclear \cite{bengio2015towards,lillicrap2020backpropagation,whittington2019theories}.

Energy based models provide a natural framework in which to formulate local learning \cite{hopfield1982neural,ackley1985learning,lecun2006tutorial}. In this setting a network is defined by an energy function over states and parameters, and its dynamics can be viewed as a relaxation process that lowers this energy \cite{hinton2002training,du2019implicit}. Contrastive Hebbian Learning (CHL) \cite{movellan1991contrastive,oreilly1996biologically,xie2003equivalence} and Equilibrium Propagation (EP) \cite{scellier2017equilibrium,scellier2018equilibrium} are two influential schemes that exploit this structure. Both use a free phase, in which the network state is driven only by the input, and a nudged phase, in which a supervisory signal biases the system toward target configurations \cite{ernoult2019updates,laborieux2021scaling,oconnor2019representations}. Parameters are updated according to the difference between local statistics measured in the two phases, which yields a local two-phase learning rule.

Despite their appeal, the theoretical status of these methods is incomplete. In its classical form, CHL is derived for architectures with symmetric weights and a single well defined energy function \cite{movellan1991contrastive,xie2003equivalence}. Under those assumptions, contrastive updates can be identified with gradients of a likelihood or related energy based objective \cite{lecun2006tutorial}. As soon as the infinitesimal limit is relaxed, which is required for noise tolerance in physical implementations and in biological circuits, the learning rule remains well defined but it is no longer obvious what global quantity it optimizes, if any \cite{xie2003equivalence,lillicrap2016random,du2019implicit}. EP addresses the weight transport issue by avoiding an explicit backward pass. It couples the energy to a supervised loss via a nudging parameter and recovers the gradient of the supervised objective in the limit of an infinitesimal perturbation \cite{scellier2017equilibrium,scellier2018equilibrium,ernoult2019updates}. In practice, however, finite nudging is required for stable learning, which introduces bias relative to the true gradient. Moreover, most analyses assume deterministic dynamics at zero temperature, in which the network state is identified with a single energy minimum \cite{ackley1985learning}, an idealization that is difficult to justify for complex nonconvex energy landscapes \cite{mezard1987spin,engel2001statistical,nishimori2001statistical}.

This paper develops a statistical mechanics foundation for contrastive learning that resolves these issues. Instead of treating the network as a deterministic energy minimizer, we model its state as a random variable distributed according to a Gibbs–Boltzmann measure at finite temperature, defined by an energy function and a task-dependent loss. Within this framework we introduce the stochastic contrastive objective, defined as the difference in Helmholtz free energy between the nudged and free phases. This objective depends only on the underlying energy based model and the loss and is well defined for arbitrary nonlinear architectures, nonconvex energy landscapes, and finite temperature dynamics \cite{mezard1987spin,engel2001statistical,nishimori2001statistical}.

We prove that the stochastic contrastive objective admits two exact and complementary gradient representations. The first expresses the gradient as the difference between the expected local energy derivatives under the nudged and free Gibbs distributions. This shows that the familiar two-phase contrastive update implements exact gradient descent on a well defined free energy objective, rather than an approximation to some other quantity, and it does so without requiring symmetric weights or an infinitesimal nudging limit. The second representation expresses the same gradient as an integral, over the nudging strength, of the covariance between the loss and the local energy derivatives. This yields a finite nudging generalization of Equilibrium Propagation in which the classical EP rule appears as a first order approximation around the free phase \cite{scellier2017equilibrium,scellier2018equilibrium,ernoult2019updates,laborieux2021scaling}. Finally, using the Gibbs variational principle, we show that the stochastic contrastive objective is a regularized proxy objective of the expected supervised loss, as it decomposes into an accuracy term and an information term that penalizes the Kullback–Leibler divergence between nudged and free distributions. This links our framework to variational inference \cite{jordan1999introduction,wainwright2008graphical,blei2017variational}, the information bottleneck \cite{tishby2000information,alemi2016deep}, and free energy based theories of brain function \cite{friston2010free,bogacz2017tutorial}.

The rest of the paper develops these results formally. We first introduce the statistical mechanics formalism and define the stochastic contrastive objective. We then derive the exact gradient formulas, establish the variational and information-theoretic interpretations, and discuss implications for local learning algorithms and their relationship to existing contrastive and equilibrium methods.

\section{The Helmholtzian Foundation for Contrastive Learning}

We begin by formalizing our theory, moving from the standard deterministic view to a more general statistical one.

\subsection{From Deterministic States to Gibbs Distributions}

Let the state of the network be a vector $s \in \mathcal{S}$, where $\mathcal{S}$ is a measurable state space (e.g., $\R^n$). Let the learnable parameters be a vector $\theta \in \Theta \subseteq \R^p$.

\begin{definition}[Energy, Loss, and Objective Kernel]
An energy function $E: \Theta \times \mathcal{S} \to \R$ is a measurable function, assumed to be continuously differentiable in $\theta$. A loss function $\ell: \mathcal{S} \to \R$ is a measurable function, independent of $\theta$. For a nudging parameter $\beta \in [0, 1]$ and a temperature $T > 0$ (often set to $1$ without loss of generality), the objective kernel is $F(\theta, \beta, s) \defeq E(\theta, s) + \beta \ell(s)$.
\end{definition}

In a deterministic setting, the system's state would be a minimizer of $F$. We generalize this by defining a probability distribution over all states.

\begin{definition}[Gibbs-Boltzmann Distribution]
The Gibbs-Boltzmann distribution is a probability measure on $\mathcal{S}$ with density function $\rho_\beta(s; \theta)$ given by:
\begin{align}
\rho_\beta(s; \theta) = \frac{1}{Z_\beta(\theta)} \exp\left(-\frac{F(\theta, \beta, s)}{T}\right)
\end{align}
where $Z_\beta(\theta)$ is the partition function, a normalization constant ensuring the distribution integrates to one:
\begin{align}
Z_\beta(\theta) = \int_{\mathcal{S}} \exp\left(-\frac{F(\theta, \beta, s)}{T}\right) \dd s. 
\end{align}
We refer to $\rho_0(s; \theta)$ as the free distribution and $\rho_1(s; \theta)$ as the nudged distribution.
\end{definition}

\subsection{The Helmholtz Free Energy and The Stochastic Contrastive Objective}

The partition function is the central quantity in statistical mechanics. Its logarithm is directly related to the system's free energy.

\begin{definition}[Helmholtz Free Energy]
The Helmholtz Free Energy $A: \Theta \times [0, 1] \to \R$ is defined as:
\begin{align}
A(\theta, \beta) \defeq -T \log Z_\beta(\theta). 
\end{align}
The free energy $A(\theta, \beta)$ is the effective energy of the entire ensemble of states, accounting for both the average energy and the entropy of the distribution. It serves as the statistical generalization of the deterministic value function $\min_s F(s)$.
\end{definition}

\noindent With this, we can define our main objective function for learning.

\begin{definition}[Stochastic Contrastive Objective]
The stochastic contrastive objective $J: \Theta \to \R$ is the difference between the nudged and free Helmholtz free energies:
\begin{align}
J(\theta) \defeq A(\theta, 1) - A(\theta, 0). 
\end{align}
\end{definition}

This objective measures the thermodynamic work required to transform the system from its free state to its nudged, target-aware state. Minimizing $J(\theta)$ corresponds to adjusting parameters $\theta$ such that the target information embodied by $\ell(s)$ aligns with the natural energy landscape $E(\theta, s)$, reducing the cost of this transformation.

\subsection{Regularity Conditions}\label{as:regularity}

Our results, while general, rely on standard regularity conditions from mathematical physics to ensure that the defined quantities are well-behaved. We assume the following for all relevant $\theta$ and $\beta$:

\paragraph{Finiteness of Partition Functions.} The integrals defining the partition functions $Z_\beta(\theta)$ are finite. This requires that $\exp(-F/T)$ decays sufficiently fast over the state space $\mathcal{S}$.
\paragraph{Differentiability under the Integral Sign.} The function $E(\theta, s)$ is sufficiently regular such that differentiation with respect to $\theta$ and $\beta$ and integration over $s$ can be interchanged (i.e., the Leibniz integral rule is applicable). This is typically satisfied if $\grad_\theta E(\theta, s)$ is dominated by an integrable function of $s$. See \autoref{app:leibniz} for details.

\section{Exact Gradients for the Contrastive Objective}\label{sec:3}

We now present our main theorems, which provide exact expressions for the gradient of the objective $J(\theta)$.

\begin{theorem}[Gradient as Expectation Contrast]\label{thm:grad_expectation}
Under Assumption \ref{as:regularity}, the gradient of the stochastic contrastive objective $J(\theta)$ is given exactly by the difference between the expected partial derivative of the energy under the nudged ($\beta=1$) and free ($\beta=0$) Gibbs distributions:
\begin{equation}
\grad_\theta J(\theta) = \E_{s \sim \rho_1(s; \theta)}[\grad_\theta E(\theta, s)] - \E_{s \sim \rho_0(s; \theta)}[\grad_\theta E(\theta, s)].
\end{equation}
\end{theorem}

\begin{proof}
The gradient of the Helmholtz free energy $A(\theta, \beta)$ with respect to the parameters $\theta$ is:
\begin{align}
\grad_\theta A(\theta, \beta) = \grad_\theta [-T \log Z_\beta(\theta)] = -T \frac{1}{Z_\beta(\theta)} \grad_\theta Z_\beta(\theta). 
\end{align}
By Assumption \ref{as:regularity}, we can apply the Leibniz integral rule to differentiate the partition function:
\begin{align}
\grad_\theta Z_\beta(\theta) &= \grad_\theta \int_{\mathcal{S}} \exp\left(-\frac{E(\theta, s) + \beta\ell(s)}{T}\right) \dd s \\
&= \int_{\mathcal{S}} \grad_\theta \left[ \exp\left(-\frac{F(\theta, \beta, s)}{T}\right) \right] \dd s \\
&= \int_{\mathcal{S}} \exp\left(-\frac{F(\theta, \beta, s)}{T}\right) \left(-\frac{1}{T}\right) \grad_\theta E(\theta, s) \dd s \\
&= -\frac{1}{T} \int_{\mathcal{S}} Z_\beta(\theta) \rho_\beta(s; \theta) \grad_\theta E(\theta, s) \dd s \\
&= -\frac{Z_\beta(\theta)}{T} \E_{s \sim \rho_\beta(s; \theta)}[\grad_\theta E(\theta, s)].
\end{align}
Substituting this expression back into the equation for $\grad_\theta A(\theta, \beta)$:
\begin{align}
\grad_\theta A(\theta, \beta) &= -T \frac{1}{Z_\beta(\theta)} \left( -\frac{Z_\beta(\theta)}{T} \E_{s \sim \rho_\beta(s; \theta)}[\grad_\theta E(\theta, s)] \right) \\
&= \E_{s \sim \rho_\beta(s; \theta)}[\grad_\theta E(\theta, s)].
\end{align}
This shows that the gradient of the free energy is the expectation of the energy gradient.
The gradient of our objective $J(\theta)$ follows from the linearity of the gradient operator:
\begin{align}
\grad_\theta J(\theta) = \grad_\theta[A(\theta, 1) - A(\theta, 0)] = \grad_\theta A(\theta, 1) - \grad_\theta A(\theta, 0). 
\end{align}
Applying our derived result for $\beta=1$ and $\beta=0$ immediately yields the theorem:
\begin{align}
\grad_\theta J(\theta) = \E_{s \sim \rho_1(s; \theta)}[\grad_\theta E(\theta, s)] - \E_{s \sim \rho_0(s; \theta)}[\grad_\theta E(\theta, s)]. 
\end{align}
\end{proof}

\begin{remark}[Connection to CHL and EP]
\autoref{thm:grad_expectation} provides the statistical foundation for the two-phase contrastive learning rule. It demonstrates that a learning rule based on the difference of local statistics $\nabla_\theta E$ between a nudged and a free phase performs exact gradient descent on the well-defined objective $J(\theta)$. In the zero-temperature limit ($T \to 0$), the Gibbs distributions $\rho_\beta$ concentrate on the global minimizers of $F(\theta, \beta, s)$, and this result recovers the deterministic rule from prior work, but now without needing assumptions of convexity or unique minima.
\end{remark}

Our second theorem provides an alternative expression for the gradient, connecting it to the path taken by the system as the nudging strength $\beta$ increases from 0 to 1. We first establish a lemma.

\begin{lemma}[Derivative of Free Energy w.r.t. Nudging]\label{lem:free_energy_beta}
The partial derivative of the Helmholtz free energy with respect to the nudging parameter $\beta$ is the expected value of the loss function:
\begin{align}
\frac{\partial A(\theta, \beta)}{\partial \beta} = \E_{s \sim \rho_\beta(s; \theta)}[\ell(s)]. 
\end{align}
\end{lemma}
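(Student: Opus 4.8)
The plan is to mirror the structure of the proof of \autoref{thm:grad_expectation}, but differentiate the free energy with respect to the nudging parameter $\beta$ rather than the parameters $\theta$. Starting from the definition $A(\theta, \beta) = -T \log Z_\beta(\theta)$, I would write
\begin{align}
\frac{\partial A(\theta, \beta)}{\partial \beta} = -T \frac{1}{Z_\beta(\theta)} \frac{\partial Z_\beta(\theta)}{\partial \beta},
\end{align}
reducing the task to computing $\partial_\beta Z_\beta(\theta)$.

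Next I would differentiate the partition function under the integral sign, which is justified by the Differentiability under the Integral Sign condition of Assumption \ref{as:regularity} (now applied to the $\beta$ variable rather than $\theta$). Since the objective kernel is $F(\theta, \beta, s) = E(\theta, s) + \beta \ell(s)$, the only $\beta$-dependence sits in the linear term $\beta \ell(s)$, so $\partial_\beta F = \ell(s)$. The chain rule then gives
\begin{align}
\frac{\partial Z_\beta(\theta)}{\partial \beta} = \int_{\mathcal{S}} \exp\left(-\frac{F(\theta, \beta, s)}{T}\right) \left(-\frac{\ell(s)}{T}\right) \dd s = -\frac{Z_\beta(\theta)}{T} \E_{s \sim \rho_\beta(s; \theta)}[\ell(s)],
\end{align}
where the final equality recognizes $\exp(-F/T)/Z_\beta = \rho_\beta$.

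Substituting this back into the expression for $\partial_\beta A$ yields
\begin{align}
\frac{\partial A(\theta, \beta)}{\partial \beta} = -T \frac{1}{Z_\beta(\theta)} \left(-\frac{Z_\beta(\theta)}{T} \E_{s \sim \rho_\beta(s; \theta)}[\ell(s)]\right) = \E_{s \sim \rho_\beta(s; \theta)}[\ell(s)],
\end{align}
which is exactly the claimed identity, with the factors of $T$ and $Z_\beta$ canceling as in the $\theta$-gradient case.

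This proof is essentially a routine adaptation, so I do not anticipate a serious mathematical obstacle; the only point requiring care is ensuring the regularity conditions are invoked for the correct variable. The interchange of differentiation and integration in Assumption \ref{as:regularity} is stated for both $\theta$ and $\beta$, but since $\ell$ is assumed bounded enough for $Z_\beta$ to be finite and $\beta$ enters only linearly, the dominated-convergence justification for swapping $\partial_\beta$ and $\int$ is clean. I would note explicitly that this lemma is the thermodynamic statement that the loss plays the role of a generalized ``force'' conjugate to $\beta$, which sets up the path-integral (covariance) representation of $\grad_\theta J$ that presumably follows: integrating $\partial_\beta A$ from $0$ to $1$ recovers $A(\theta, 1) - A(\theta, 0) = J(\theta)$, tying this lemma directly to the second gradient representation.
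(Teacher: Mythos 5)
Your proposal is correct and follows essentially the same route as the paper's own proof: differentiate $A(\theta,\beta) = -T\log Z_\beta(\theta)$, compute $\partial_\beta Z_\beta$ under the integral sign using $\partial_\beta F = \ell(s)$, and cancel the factors of $T$ and $Z_\beta$. The additional remarks about the loss acting as a generalized force conjugate to $\beta$ and the link to \autoref{thm:grad_covariance} are accurate context but do not change the argument.
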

\begin{proof}
Following a similar procedure as in \autoref{thm:grad_expectation}:
\begin{align}
\frac{\partial A(\theta, \beta)}{\partial \beta} = -T \frac{1}{Z_\beta(\theta)} \frac{\partial Z_\beta(\theta)}{\partial \beta}. 
\end{align}
Differentiating the partition function with respect to $\beta$:
\begin{align}
\frac{\partial Z_\beta(\theta)}{\partial \beta} &= \int_{\mathcal{S}} \exp\left(-\frac{E(\theta, s) + \beta\ell(s)}{T}\right) \left(-\frac{\ell(s)}{T}\right) \dd s \\
&= -\frac{1}{T} \int_{\mathcal{S}} Z_\beta(\theta) \rho_\beta(s; \theta) \ell(s) \dd s = -\frac{Z_\beta(\theta)}{T} \E_{s \sim \rho_\beta}[\ell(s)].
\end{align}
Substituting this back gives the result:
\begin{align}
\frac{\partial A(\theta, \beta)}{\partial \beta} = -T \frac{1}{Z_\beta(\theta)} \left(-\frac{Z_\beta(\theta)}{T} \E_{s \sim \rho_\beta}[\ell(s)]\right) = \E_{s \sim \rho_\beta}[\ell(s)].
\end{align}
\end{proof}

\begin{theorem}[Gradient as Integrated Covariance]\label{thm:grad_covariance} The gradient of the objective $J(\theta)$ is given by the integral of the covariance between the loss and the energy gradient, evaluated along the path of distributions from $\beta=0$ to $\beta=1$:
\begin{align}
\grad_\theta J(\theta) = -\frac{1}{T} \int_0^1 \Cov_{s \sim \rho_\beta(s; \theta)} \left[ \ell(s), \grad_\theta E(\theta, s) \right] \dd\beta.
\end{align}
\end{theorem}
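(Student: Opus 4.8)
The plan is to reduce the statement to the fundamental theorem of calculus in the nudging variable $\beta$ and then recognize the resulting integrand as a covariance. By \autoref{thm:grad_expectation}, the map $G(\beta) \defeq \E_{s \sim \rho_\beta(s;\theta)}[\grad_\theta E(\theta,s)]$ is precisely $\grad_\theta A(\theta,\beta)$, so that $\grad_\theta J(\theta) = \grad_\theta A(\theta,1) - \grad_\theta A(\theta,0) = G(1) - G(0)$. Assuming $G$ is continuously differentiable in $\beta$ on $[0,1]$ (which the regularity conditions of \autoref{as:regularity}, extended to differentiation in $\beta$, guarantee), the fundamental theorem of calculus gives $\grad_\theta J(\theta) = \int_0^1 G'(\beta)\,\dd\beta$, and the entire problem collapses to evaluating the single derivative $G'(\beta)$.

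To compute $G'(\beta)$, I would write the normalized expectation as a quotient $G(\beta) = N_\beta / Z_\beta$, where $N_\beta \defeq \int_{\mathcal{S}} \grad_\theta E(\theta,s)\, \exp(-F(\theta,\beta,s)/T)\,\dd s$. Differentiating under the integral sign, each $\partial_\beta$ brings down a factor of $-\ell(s)/T$, yielding $\partial_\beta N_\beta = -\tfrac{Z_\beta}{T}\,\E_{s\sim\rho_\beta}[\ell(s)\,\grad_\theta E(\theta,s)]$, while the same computation used in \autoref{lem:free_energy_beta} gives $\partial_\beta Z_\beta = -\tfrac{Z_\beta}{T}\,\E_{s\sim\rho_\beta}[\ell(s)]$. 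Applying the quotient rule, $G'(\beta) = \partial_\beta N_\beta / Z_\beta - (N_\beta/Z_\beta)(\partial_\beta Z_\beta / Z_\beta)$, the two terms assemble into
\[
G'(\beta) = -\frac{1}{T}\Bigl(\E_{s\sim\rho_\beta}[\ell(s)\,\grad_\theta E(\theta,s)] - \E_{s\sim\rho_\beta}[\ell(s)]\,\E_{s\sim\rho_\beta}[\grad_\theta E(\theta,s)]\Bigr) = -\frac{1}{T}\Cov_{s\sim\rho_\beta(s;\theta)}\bigl[\ell(s),\,\grad_\theta E(\theta,s)\bigr].
\]
Substituting this into the integral representation of $\grad_\theta J(\theta)$ then yields the claimed formula directly.

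An equivalent and perhaps more elegant route, which reuses both earlier results, is to invoke the equality of mixed partials: since $A$ is jointly smooth under \autoref{as:regularity}, $G'(\beta) = \partial_\beta \grad_\theta A(\theta,\beta) = \grad_\theta\,\partial_\beta A(\theta,\beta) = \grad_\theta\,\E_{s\sim\rho_\beta}[\ell(s)]$ by \autoref{lem:free_energy_beta}, and computing this $\theta$-gradient using $\grad_\theta \rho_\beta = -\tfrac{1}{T}\rho_\beta(\grad_\theta E - \E_{\rho_\beta}[\grad_\theta E])$ reproduces the same covariance. I expect the main obstacle to be purely analytic rather than algebraic: one must justify the two interchanges of differentiation and integration — that defining $N_\beta$ and $Z_\beta$ as $C^1$ functions of $\beta$ (hence $G \in C^1$) is legitimate, and that the mixed partials commute — both of which follow from domination hypotheses extending those of \autoref{as:regularity} to the $\beta$ variable. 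The remaining manipulation is routine exponential-family bookkeeping, where the only points demanding care are the sign and the $1/T$ prefactor.
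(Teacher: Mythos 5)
Your proof is correct, but it runs the argument in the opposite order from the paper, and the distinction is worth noting. The paper first applies the fundamental theorem of calculus to $A(\theta,\cdot)$ itself, using \autoref{lem:free_energy_beta} to write $J(\theta) = \int_0^1 \E_{s\sim\rho_\beta}[\ell(s)]\,\dd\beta$, and then pushes $\grad_\theta$ inside the $\beta$-integral, evaluating $\grad_\theta\,\E_{s\sim\rho_\beta}[\ell(s)]$ with the log-derivative (score-function) trick. You instead push $\grad_\theta$ through first, using \autoref{thm:grad_expectation} to identify $\grad_\theta J(\theta)$ as the endpoint difference $G(1)-G(0)$ of the expectation-contrast curve $G(\beta) = \E_{s\sim\rho_\beta}[\grad_\theta E(\theta,s)]$, and then apply the fundamental theorem of calculus to $G$, computing $G'(\beta)$ by a quotient rule on $N_\beta/Z_\beta$. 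Both arguments are computing the same mixed partial $\partial_\beta \grad_\theta A = \grad_\theta \partial_\beta A$ in opposite orders, a symmetry you make explicit in your alternative route (which, modulo the order of operations, essentially reproduces the paper's proof). Your primary route has two attractive features: it exhibits \autoref{thm:grad_covariance} as a direct corollary of \autoref{thm:grad_expectation} --- the covariance integrand is literally the velocity of the expectation contrast along the nudging path, so the integrated-covariance formula visibly interpolates the two-phase contrast --- and it avoids the score-function manipulation entirely. The paper's route, by going through $\E_{s\sim\rho_\beta}[\ell(s)]$, keeps the loss-along-the-path quantity central, which connects more directly to the Equilibrium Propagation literature. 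The analytic burden is the same either way: both proofs need domination hypotheses strong enough to differentiate integrals involving $\ell(s)\,\grad_\theta E(\theta,s)\,e^{-F/T}$, which is what the existence of the covariance requires in any case, and you correctly flag this as the only nontrivial point.
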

\begin{proof}
By the Fundamental Theorem of Calculus and \autoref{lem:free_energy_beta}, we can write $J(\theta)$ as an integral:
\begin{align}
J(\theta) = A(\theta, 1) - A(\theta, 0) = \int_0^1 \frac{\partial A(\theta, \beta)}{\partial \beta} \dd\beta = \int_0^1 \E_{s \sim \rho_\beta(s; \theta)}[\ell(s)] \dd\beta. \end{align}
We now take the gradient of this expression with respect to $\theta$. Assumption \ref{as:regularity} allows interchanging the gradient and the integral over $\beta$:
\begin{align}
\grad_\theta J(\theta) = \int_0^1 \grad_\theta \left( \E_{s \sim \rho_\beta(s; \theta)}[\ell(s)] \right) \dd\beta.
\end{align}
We analyze the inner term $\grad_\theta \E_{s \sim \rho_\beta}[\ell(s)]$ using the log-derivative trick:
\begin{align}
\grad_\theta \E_{s \sim \rho_\beta}[\ell(s)] &= \grad_\theta \int_{\mathcal{S}} \rho_\beta(s; \theta) \ell(s) \dd s \\
&= \int_{\mathcal{S}} (\grad_\theta \rho_\beta(s; \theta)) \ell(s) \dd s \\
&= \int_{\mathcal{S}} \rho_\beta(s; \theta) (\grad_\theta \log \rho_\beta(s; \theta)) \ell(s) \dd s \\
&= \E_{s \sim \rho_\beta}[\ell(s) \grad_\theta \log \rho_\beta(s; \theta)]. 
\end{align}
The gradient of the log-density is:
\begin{align}
\grad_\theta \log \rho_\beta(s; \theta) &= \grad_\theta \left[ -\frac{F(\theta, \beta, s)}{T} - \log Z_\beta(\theta) \right] \\
&= -\frac{1}{T}\grad_\theta E(\theta, s) - \grad_\theta \log Z_\beta(\theta) \\
&= -\frac{1}{T}\grad_\theta E(\theta, s) - \frac{1}{Z_\beta(\theta)} \grad_\theta Z_\beta(\theta) \\
&= -\frac{1}{T}\grad_\theta E(\theta, s) + \frac{1}{T} \grad_\theta A(\theta, \beta).
\end{align}
From the proof of Theorem \ref{thm:grad_expectation}, we know $\grad_\theta A(\theta, \beta) = \E_{s \sim \rho_\beta}[\grad_\theta E(\theta, s)]$. Substituting this back:
\begin{align}
\grad_\theta \E_{s \sim \rho_\beta}[\ell(s)] &= \E_{s \sim \rho_\beta}\left[ \ell(s) \left( -\frac{1}{T}\grad_\theta E(\theta, s) + \frac{1}{T}\E_{s \sim \rho_\beta}[\grad_\theta E(\theta, s)] \right) \right] \\
&= -\frac{1}{T} \left( \E_{s \sim \rho_\beta}[\ell(s) \grad_\theta E(\theta, s)] - \E_{s \sim \rho_\beta}[\ell(s)] \E_{s \sim \rho_\beta}[\grad_\theta E(\theta, s)] \right) \\
&= -\frac{1}{T} \Cov_{s \sim \rho_\beta} \left[ \ell(s), \grad_\theta E(\theta, s) \right].
\end{align}
Finally, substituting this expression into our integral form for $\grad_\theta J(\theta)$ yields the theorem.
\end{proof}

\begin{remark}[Connection to Equilibrium Propagation]
\autoref{thm:grad_covariance} provides the exact, finite-$\beta$ foundation for Equilibrium Propagation. The EP update rule is derived by approximating this integral with a first-order Taylor expansion around $\beta=0$. Our result shows that the true gradient is an accumulation of covariances along the entire thermodynamic path. Learning seeks to adjust parameters $\theta$ to induce an anti-correlation between states with high loss and states that are sensitive to changes in $\theta$.
\end{remark}

\section{Connection to Supervised Learning}

We now investigate how minimizing our objective $J(\theta)$ relates to minimizing the standard supervised loss, which we can define as the expected loss under the free distribution, $\mathcal{L}_{\text{sup}}(\theta) \defeq \E_{s \sim \rho_0}[\ell(s)]$. We show that $J(\theta)$ is a regularized proxy objective of this quantity. We first state the Gibbs variational principle.

\begin{lemma}[Gibbs Variational Principle]\label{lem:gibbs}
The Helmholtz free energy $A(\theta, \beta)$ is the minimum of the variational free energy functional over all probability distributions $q(s)$:
\begin{align}
A(\theta, \beta) = \min_{q} \bigg\{ \E_{s \sim q(s)}[E(\theta, s) + \beta\ell(s)] - T S(q) \bigg\}, 
\end{align}
where 
\begin{align}
S(q) = -\int q(s) \log q(s) \dd s 
\end{align}
is the differential entropy of the distribution $q$. The minimum is achieved uniquely at $q(s) = \rho_\beta(s; \theta)$.
\end{lemma}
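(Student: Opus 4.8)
The plan is to reduce the minimization to the nonnegativity of a Kullback–Leibler divergence, a move that simultaneously identifies the minimal value and its unique minimizer. Writing the variational functional explicitly as
\begin{align}
\mathcal{F}[q] \defeq \E_{s\sim q}[E(\theta,s)+\beta\ell(s)] + T\int_{\mathcal{S}} q(s)\log q(s)\,\dd s,
\end{align}
I would first restrict attention to distributions $q$ that are absolutely continuous with respect to $\rho_\beta$; for any $q$ outside this class the entropy or energy term diverges and $\mathcal{F}[q]=+\infty$, so such $q$ cannot be minimizers and may be discarded.

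The key step is to rewrite the objective kernel in terms of the Gibbs density itself. Inverting the definition $\rho_\beta = Z_\beta^{-1}\exp(-F/T)$ and using $-T\log Z_\beta = A$ gives the pointwise identity $F(\theta,\beta,s) = -T\log\rho_\beta(s;\theta) + A(\theta,\beta)$. Substituting this into the energy expectation and invoking $\int_{\mathcal{S}} q\,\dd s = 1$ to extract the constant $A(\theta,\beta)$, the functional collapses to
\begin{align}
\mathcal{F}[q] &= A(\theta,\beta) - T\int_{\mathcal{S}} q(s)\log\rho_\beta(s;\theta)\,\dd s + T\int_{\mathcal{S}} q(s)\log q(s)\,\dd s \\
&= A(\theta,\beta) + T\,\KL\!\left(q \,\|\, \rho_\beta(\cdot\,;\theta)\right).
\end{align}

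Finally I would invoke Gibbs' inequality, namely $\KL(q\|\rho_\beta)\ge 0$ with equality if and only if $q=\rho_\beta$ almost everywhere. This follows from Jensen's inequality for the strictly convex map $-\log$, via $\KL(q\|\rho_\beta)=\E_{s\sim q}[-\log(\rho_\beta/q)]\ge -\log\E_{s\sim q}[\rho_\beta/q]=-\log 1=0$, with strictness forcing $\rho_\beta/q$ to be $q$-almost-surely constant, and hence equal to $1$. Since $T>0$, this yields $\mathcal{F}[q]\ge A(\theta,\beta)$ for every admissible $q$, with equality attained uniquely at $q=\rho_\beta$, which is precisely the claim.

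The main obstacle is analytic rather than algebraic: the central rearrangement is a one-line computation, but I must keep the functional well-defined throughout by restricting to $q\ll\rho_\beta$ so that $\int q\log(q/\rho_\beta)$ is meaningful, and I must lean on the finiteness of $Z_\beta$ in Assumption \ref{as:regularity} to guarantee that $\rho_\beta$ is a genuine normalized density, which is what licenses pulling $A(\theta,\beta)$ out of the integral as a constant. The uniqueness assertion rests entirely on the strictness of Jensen's inequality for $-\log$, so I would take care to phrase the equality case as a $q$-almost-everywhere identity rather than a pointwise one.
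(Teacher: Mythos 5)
Your proof is correct, but there is nothing in the paper to compare it against: the paper states Lemma \ref{lem:gibbs} without proof, invoking it as a classical fact, so your argument fills a genuine gap rather than paralleling an existing derivation. The route you take --- inverting the Gibbs density to get $F(\theta,\beta,s) = -T\log\rho_\beta(s;\theta) + A(\theta,\beta)$, collapsing the functional to $\mathcal{F}[q] = A(\theta,\beta) + T\,\KL\left(q \,\|\, \rho_\beta\right)$, and concluding by nonnegativity and strictness of the KL divergence --- is the standard and correct one. Two small technical remarks. First, your restriction to $q \ll \rho_\beta$ is automatic here: since $E$ and $\ell$ are real-valued, $F$ is finite everywhere, hence $\rho_\beta > 0$ on all of $\mathcal{S}$ and every probability density $q$ is absolutely continuous with respect to $\rho_\beta$; the caveat is harmless but does no work. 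Second, in the Jensen step you should write $\E_{s\sim q}[\rho_\beta/q] = \int_{\{q>0\}} \rho_\beta(s)\,\dd s \le 1$ rather than $=1$, since the expectation only sees the support of $q$; the chain still closes because $-\log$ is decreasing, and in the equality case strict convexity forces $\rho_\beta/q$ to equal a $q$-a.s.\ constant $c$ with $\KL = -\log c = 0$, hence $c = 1$, as you state. Finally, it is worth noting that the identity you prove is strictly stronger than the lemma and quietly unifies the two results the paper derives from it separately: taking $\beta = 1$ with trial $q = \rho_0$ gives $J(\theta) = \E_{\rho_0}[\ell] - T\,\KL(\rho_0\|\rho_1)$, which is Theorem \ref{thm:variational_bound} with an explicit gap, while taking $\beta = 0$ with trial $q = \rho_1$ gives $J(\theta) = \E_{\rho_1}[\ell] + T\,\KL(\rho_1\|\rho_0)$, which is exactly Theorem \ref{thm:decomposition}.
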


\begin{theorem}[Variational Bound on Supervised Loss]\label{thm:variational_bound}
The stochastic contrastive objective $J(\theta)$ provides a tight variational lower bound on the expected supervised loss under the free distribution:
\begin{align}
J(\theta) \le \E_{s \sim \rho_0(s; \theta)}[\ell(s)].
\end{align}
\end{theorem}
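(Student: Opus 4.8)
The plan is to exploit the Gibbs variational principle (\autoref{lem:gibbs}) by using the free distribution $\rho_0$ as a \emph{suboptimal} trial distribution in the variational characterization of the nudged free energy $A(\theta,1)$. Since the variational functional at $\beta=1$ is minimized uniquely at $\rho_1$, substituting any other distribution can only increase it; in particular, plugging in $\rho_0$ gives
\begin{align}
A(\theta,1) \le \E_{s\sim\rho_0}[E(\theta,s)+\ell(s)] - T\,S(\rho_0).
\end{align}

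Next I would evaluate $A(\theta,0)$ \emph{exactly}. For $\beta=0$ the variational principle is achieved at its own minimizer $\rho_0$, so with no inequality,
\begin{align}
A(\theta,0) = \E_{s\sim\rho_0}[E(\theta,s)] - T\,S(\rho_0).
\end{align}
Subtracting this exact expression from the upper bound above, the energy expectation $\E_{s\sim\rho_0}[E(\theta,s)]$ and the entropy term $T\,S(\rho_0)$ cancel identically, leaving $J(\theta) = A(\theta,1)-A(\theta,0) \le \E_{s\sim\rho_0}[\ell(s)]$, which is the claimed bound.

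To establish \emph{tightness}, I would track the slack in the variational inequality explicitly. Writing $E+\ell = -T\log\rho_1 - T\log Z_1$ and substituting into the $\beta=1$ functional shows that the gap between the functional evaluated at any $q$ and its minimum $A(\theta,1)$ equals $T\,\KL(q\|\rho_1)$; taking $q=\rho_0$ turns the earlier inequality into the exact decomposition
\begin{align}
J(\theta) = \E_{s\sim\rho_0(s;\theta)}[\ell(s)] - T\,\KL(\rho_0 \| \rho_1).
\end{align}
Thus the objective equals the supervised loss minus a nonnegative information penalty, so the bound is tight precisely when $\rho_0=\rho_1$, and its slack is fully characterized by the divergence between the two phases. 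This is exactly where the ``information term'' advertised in the introduction emerges.

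The calculation is short, and the one point requiring care is the bookkeeping of the differential entropy: the cancellation works only because the same trial distribution $\rho_0$ supplies the entropy term $T\,S(\rho_0)$ both in the \emph{bound} on $A(\theta,1)$ and in the \emph{exact} value of $A(\theta,0)$. The main conceptual obstacle is therefore recognizing that $\rho_0$ must play two roles simultaneously—minimizer at $\beta=0$ and trial distribution at $\beta=1$—rather than any genuine analytic difficulty; the secondary subtlety is verifying the $T\,\KL(q\|\rho_1)$ slack identity needed for the tightness claim.
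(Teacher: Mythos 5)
Your proof is correct and takes essentially the same route as the paper's: substitute $\rho_0$ as a trial distribution in the Gibbs variational principle at $\beta=1$, use the exact identity $A(\theta,0) = \E_{s\sim\rho_0}[E(\theta,s)] - T\,S(\rho_0)$, and subtract. Your extra slack identity $J(\theta) = \E_{s\sim\rho_0}[\ell(s)] - T\,\KL(\rho_0\,\|\,\rho_1)$ is also correct and goes slightly beyond the paper's proof of this theorem---it is the reverse-KL counterpart of \autoref{thm:decomposition}, and it upgrades the qualitative word ``tight'' into an exact characterization of the gap.
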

\begin{proof}
From the Gibbs Variational Principle (\autoref{lem:gibbs}), the free energy $A(\theta, 1)$ is the minimum of the variational free energy functional. Therefore, for any trial distribution $q(s)$, we have:
\begin{align}
A(\theta, 1) \le \E_{s \sim q(s)}[E(\theta, s) + \ell(s)] - T S(q).
\end{align}
Let us choose the free distribution $q(s) = \rho_0(s; \theta)$ as our specific trial distribution. Substituting this in:
\begin{align}
A(\theta, 1) &\le \E_{s \sim \rho_0}[E(\theta, s) + \ell(s)] - T S(\rho_0) \\
&= \E_{s \sim \rho_0}[E(\theta, s)] + \E_{s \sim \rho_0}[\ell(s)] - T S(\rho_0).
\end{align}
By definition, the Helmholtz free energy of the free system is 
\begin{align}
A(\theta, 0) = \E_{s \sim \rho_0}[E(\theta, s)] - T S(\rho_0).
\end{align}
Substituting this into our inequality gives:
\begin{align}
A(\theta, 1) \le A(\theta, 0) + \E_{s \sim \rho_0}[\ell(s)]. 
\end{align}
Rearranging the terms yields the final result:
\begin{align}
J(\theta) = A(\theta, 1) - A(\theta, 0) \le \E_{s \sim \rho_0}[\ell(s)]. 
\end{align}
\end{proof}

\begin{remark}
\autoref{thm:variational_bound} establishes $J(\theta)$ as a lower bound, but the justification for minimizing it is twofold. First, for non-negative losses, $J(\theta)=0$ if and only if $\E_{\rho_0}[\ell(s)] = 0$; thus, global minima of the objective are global minima of the supervised loss. Second, as we will see in \autoref{thm:decomposition}, $J(\theta)$ effectively minimizes the loss under the nudged distribution while simultaneously pulling the free distribution toward it via KL regularization.
\end{remark}

\section{Information-Theoretic Interpretation}

We now reveal that $J(\theta)$ can be decomposed into two competing terms: an information-theoretic cost that measures the distance between the free and nudged distributions, and a performance cost that measures the residual loss in the nudged state. This decomposition provides a direct link to the principles of variational inference and the information bottleneck.

\begin{theorem}[Information-Performance Decomposition]\label{thm:decomposition}
Under the same conditions as before, $J(\theta)$ can be exactly decomposed as:
\begin{equation}
J(\theta) = \E_{s \sim \rho_1(s; \theta)}[\ell(s)] + T \KL(\rho_1(s; \theta) \,\|\, \rho_0(s; \theta))
\end{equation}
where $\KL(p\|q)$ is the Kullback-Leibler (KL) divergence between distributions $p$ and $q$.
\end{theorem}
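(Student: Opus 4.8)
The plan is to prove the identity by direct substitution of the explicit Gibbs--Boltzmann densities into the definition of the KL divergence, and then to recognize the leftover constant terms as the two Helmholtz free energies. This is a purely algebraic manipulation: no analytic machinery beyond the definitions of $\rho_\beta$, $Z_\beta$, and $A(\theta,\beta)$ is needed, so I would not invoke the Leibniz rule or the Gibbs variational principle at all.

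First I would write the divergence as an expectation under the nudged distribution,
\[
\KL(\rho_1 \,\|\, \rho_0) = \E_{s \sim \rho_1(s;\theta)}\!\left[ \log \rho_1(s;\theta) - \log \rho_0(s;\theta) \right].
\]
Next I would substitute the log-densities. Since $\log \rho_\beta(s;\theta) = -F(\theta,\beta,s)/T - \log Z_\beta(\theta)$ and $F(\theta,1,s) - F(\theta,0,s) = \ell(s)$, the energy $E(\theta,s)$ cancels in the difference, leaving
\[
\log \rho_1(s;\theta) - \log \rho_0(s;\theta) = -\frac{\ell(s)}{T} - \log Z_1(\theta) + \log Z_0(\theta).
\]

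The cancellation of the energy term is the crux of the argument and is precisely what makes the decomposition clean. Taking the expectation under $\rho_1$, the two log-partition terms are constant in $s$ and factor out, giving
\[
\KL(\rho_1 \,\|\, \rho_0) = -\frac{1}{T}\,\E_{s \sim \rho_1(s;\theta)}[\ell(s)] - \log Z_1(\theta) + \log Z_0(\theta).
\]
Finally I would multiply through by $T$ and apply $A(\theta,\beta) = -T\log Z_\beta(\theta)$, which turns $-T\log Z_1(\theta) + T\log Z_0(\theta)$ into $A(\theta,1) - A(\theta,0) = J(\theta)$. Rearranging yields $J(\theta) = \E_{s \sim \rho_1(s;\theta)}[\ell(s)] + T\,\KL(\rho_1 \,\|\, \rho_0)$, as claimed.

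I anticipate no genuine obstacle; the only care needed is bookkeeping the signs on the partition-function constants and verifying that the energy contribution $E(\theta,s)$ cancels exactly rather than merely simplifying. As a consistency check, combining this decomposition (which forces $J(\theta) \ge \E_{s \sim \rho_1}[\ell(s)]$, since the KL term is nonnegative) with \autoref{thm:variational_bound} (which gives $J(\theta) \le \E_{s \sim \rho_0}[\ell(s)]$) recovers the intuitive inequality $\E_{s \sim \rho_1}[\ell(s)] \le \E_{s \sim \rho_0}[\ell(s)]$: the nudged distribution indeed incurs no larger expected loss than the free one.
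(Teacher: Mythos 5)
Your proof is correct and follows essentially the same route as the paper's: both expand the log-ratio $\log(\rho_1/\rho_0)$, exploit the exact cancellation of $E(\theta,s)$ leaving $-\ell(s)/T$ plus partition-function constants, take the expectation under $\rho_1$, and identify $-T\log Z_1 + T\log Z_0$ with $J(\theta)$ via the free-energy definition. The only difference is trivial ordering (the paper rewrites the log-ratio as $(J(\theta)-\ell(s))/T$ before integrating, whereas you integrate first), and your closing consistency check $\E_{\rho_1}[\ell] \le \E_{\rho_0}[\ell]$ is a nice sanity bonus not present in the paper.
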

\begin{proof}
The proof proceeds directly from the definition of the KL divergence. Let $\rho_1 \triangleq \rho_1(s; \theta)$ and $\rho_0 \triangleq \rho_0(s; \theta)$ for brevity. The KL divergence from the free distribution $\rho_0$ to the nudged distribution $\rho_1$ is defined as:
\begin{align}
\KL(\rho_1 \,\|\, \rho_0) = \int_{\mathcal{S}} \rho_1(s) \log\left(\frac{\rho_1(s)}{\rho_0(s)}\right) \dd s. 
\end{align}
We first analyze the logarithm term. By the definition of the Gibbs distributions:
\begin{align}
\frac{\rho_1(s)}{\rho_0(s)} &= \frac{Z_0(\theta)}{Z_1(\theta)} \frac{\exp\left(-(E(\theta, s) + \ell(s))/T\right)}{\exp\left(-E(\theta, s)/T\right)} \\
&= \frac{Z_0(\theta)}{Z_1(\theta)} \exp\left(-\frac{\ell(s)}{T}\right).
\end{align}
Taking the logarithm of this ratio:
\begin{align}
\log\left(\frac{\rho_1(s)}{\rho_0(s)}\right) &= \log\left(\frac{Z_0(\theta)}{Z_1(\theta)}\right) - \frac{\ell(s)}{T} \\
&= -(\log Z_1(\theta) - \log Z_0(\theta)) - \frac{\ell(s)}{T}.
\end{align}
Recalling the definition of the Helmholtz Free Energy, $A(\theta, \beta) = -T \log Z_\beta(\theta)$, we have $\log Z_\beta(\theta) = -A(\theta, \beta)/T$. Substituting this in:
\begin{align}
\log\left(\frac{\rho_1(s)}{\rho_0(s)}\right) &= -\left(-\frac{A(\theta, 1)}{T} - \left(-\frac{A(\theta, 0)}{T}\right)\right) - \frac{\ell(s)}{T} \\
&= \frac{A(\theta, 1) - A(\theta, 0)}{T} - \frac{\ell(s)}{T}.
\end{align}
By the definition of our objective, $J(\theta) = A(\theta, 1) - A(\theta, 0)$, this simplifies to:
\begin{align}
\log\left(\frac{\rho_1(s)}{\rho_0(s)}\right) = \frac{J(\theta) - \ell(s)}{T}. 
\end{align}
Now, we substitute this back into the definition of the KL divergence:
\begin{align}
\KL(\rho_1 \,\|\, \rho_0) &= \int_{\mathcal{S}} \rho_1(s) \left(\frac{J(\theta) - \ell(s)}{T}\right) \dd s \\
&= \frac{1}{T} \int_{\mathcal{S}} \rho_1(s) J(\theta) \dd s - \frac{1}{T} \int_{\mathcal{S}} \rho_1(s) \ell(s) \dd s.
\end{align}
Since $J(\theta)$ is a constant with respect to the integration variable $s$, and $\int \rho_1(s) \dd s = 1$, the first term simplifies. The second term is, by definition, the expectation of $\ell(s)$ under $\rho_1$.
\begin{align}
\KL(\rho_1 \,\|\, \rho_0) = \frac{J(\theta)}{T} - \frac{1}{T} \E_{s \sim \rho_1}[\ell(s)]. 
\end{align}
Rearranging this equation to solve for $J(\theta)$ yields the theorem:
\begin{align}
J(\theta) = \E_{s \sim \rho_1}[\ell(s)] + T \KL(\rho_1 \,\|\, \rho_0). 
\end{align}
\end{proof}

\begin{remark}[Mechanism of Learning]
While \autoref{thm:variational_bound} establishes $J(\theta)$ as a lower bound on the supervised loss $\E_{\rho_0}[\ell]$, minimizing a lower bound does not strictly guarantee minimization of the target. \autoref{thm:decomposition} resolves this by identifying $J(\theta)$ as a regularized surrogate rather than a loose bound. The objective explicitly minimizes the loss in the nudged phase ($\rho_1$) while simultaneously minimizing the KL divergence between $\rho_1$ and $\rho_0$. This compels the free phase to emulate the low-energy statistics of the nudged phase, effectively distilling the supervisory signal into the network's natural dynamics.
\end{remark}

\section{Results}

We evaluated finite–nudge Equilibrium Propagation (EP) on Fashion–MNIST using a single–layer energy–based network with $\texttt{tanh}$ units. Holding architecture and optimization constant, we varied only the learning rule and nudging strength $\beta$. \autoref{fig:thermodynamic-validation} summarizes the structural, statistical, and practical implications of our findings. First, we assessed classification performance (Fig.~\ref{fig:thermodynamic-validation}, Panel~C). We compared four training schemes: classical infinitesimal EP ($\beta = 0.01$), finite–nudge EP ($\beta = 1.0$), a discrete path–integral variant, and standard backpropagation. Infinitesimal EP failed, stalling near chance (20–30\% accuracy). Conversely, both finite–nudge and path–integral EP rapidly achieved $\sim 80\%$ accuracy, closely tracking the backpropagation baseline. These results confirm that large-$\beta$ contrastive learning is a competitive training mechanism, while the infinitesimal regime is ineffective for this task.

Next, we analyzed the signal–to–noise ratio (SNR) of the state updates (Fig.~\ref{fig:thermodynamic-validation}, Panel~B). We measured the SNR of the activity difference $\Delta s = s_\beta - s_0$ over repeated Langevin runs. For $\beta \lesssim 10^{-2}$, the update signal was indistinguishable from sampling noise. As $\beta$ approached $1$, SNR improved by an order of magnitude. This empirically confirms that finite nudging unlocks a signal regime inaccessible to the classical small-$\beta$ limit.

Finally, we inspected the orientation of the parameter updates (Fig.~\ref{fig:thermodynamic-validation}, Panel~A). We measured the cosine similarity between the practical contrastive update and two benchmarks: the exact supervised gradient ($\nabla \mathcal{L}_{\mathrm{sup}}$) and the Monte Carlo estimate of the free–energy gradient ($\nabla J_\beta$). Alignment was negligible near $\beta \approx 10^{-3}$ but increased monotonically, reaching $\sim 0.5$ at $\beta = 1$. Thus, finite–nudge EP aligns with the true thermodynamic objective and, by operating at large $\beta$, overcomes the noise limitations that cripple infinitesimal approaches.

\begin{figure}[htbp]
  \centering
  \includegraphics[width=\textwidth]{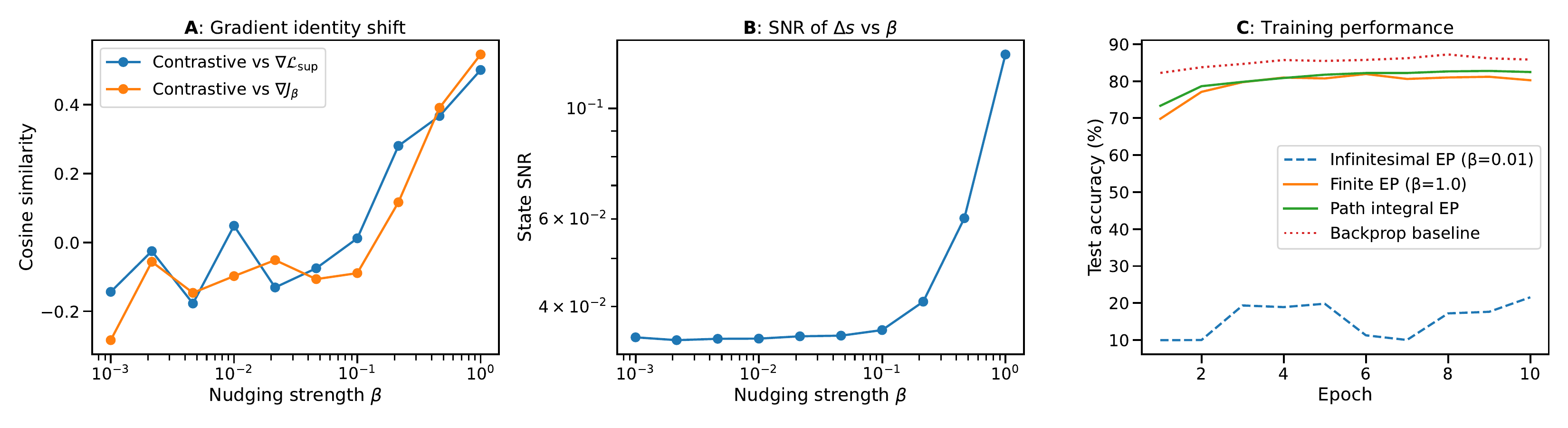}
  \caption{\textbf{Thermodynamic validation on Fashion--MNIST.}
Experiments utilize a single hidden-layer energy-based network with $\texttt{tanh}$ units.
\textbf{(A) Gradient Alignment:} Cosine similarity between the practical contrastive update $\hat{g}(\beta)$ and two references: the supervised backprop gradient $\nabla \mathcal{L}_{\mathrm{sup}}$ and the true free-energy gradient $\nabla J_\beta$. Alignment improves monotonically with $\beta$, confirming that large nudges remain gradient-like.
\textbf{(B) Signal-to-Noise Ratio:} SNR of the state perturbation $\Delta s = s_\beta - s_0$. Finite nudging ($\beta \to 1$) yields high SNR, whereas infinitesimal nudges ($\beta \lesssim 10^{-2}$) are dominated by sampling noise.
\textbf{(C) Test Accuracy:} Finite-nudge ($\beta=1.0$) and path-integral EP achieve $\sim 80\%$ accuracy, closely tracking standard backprop. Classical infinitesimal EP ($\beta=0.01$) fails to learn.
  }
  \label{fig:thermodynamic-validation}
\end{figure}

\section{Conclusion}

This work decouples Equilibrium Propagation from the infinitesimal limit. We proved that finite-nudge learning is not a biased approximation of backpropagation, but exact gradient descent on the Helmholtz free energy difference. This statistical view reinterprets the \emph{error} of large nudges as a variational term that minimizes the divergence between free and target distributions. Empirically, unlocking the finite-nudge regime solves the signal-to-noise problem that cripples infinitesimal approaches.

\bibliographystyle{plain}

\begin{thebibliography}{99}

\bibitem{ackley1985learning}
David H. Ackley, Geoffrey E. Hinton, and Terrence J. Sejnowski.
\newblock A learning algorithm for Boltzmann machines.
\newblock {\em Cognitive Science}, 9(1):147--169, 1985.

\bibitem{alemi2016deep}
Alexander A. Alemi, Ian Fischer, Joshua V. Dillon, and Kevin Murphy.
\newblock Deep variational information bottleneck.
\newblock In {\em International Conference on Learning Representations}, 2017.

\bibitem{bengio2015towards}
Yoshua Bengio, Dong-Hyun Lee, J\"org Bornschein, Thomas Mesnard, and Zhouhan Lin.
\newblock Towards biologically plausible deep learning.
\newblock {\em arXiv preprint arXiv:1502.04156}, 2015.

\bibitem{blei2017variational}
David M. Blei, Alp Kucukelbir, and Jon D. McAuliffe.
\newblock Variational inference: A review for statisticians.
\newblock {\em Journal of the American Statistical Association}, 112(518):859--877, 2017.

\bibitem{bogacz2017tutorial}
Rafal Bogacz.
\newblock A tutorial on the free-energy framework for modelling perception and learning.
\newblock {\em Journal of Mathematical Psychology}, 76:198--211, 2017.

\bibitem{caporale2008spike}
Natalia Caporale and Yang Dan.
\newblock Spike timing--dependent plasticity: A Hebbian learning rule.
\newblock {\em Annual Review of Neuroscience}, 31:25--46, 2008.

\bibitem{crick1989recent}
Francis Crick.
\newblock The recent excitement about neural networks.
\newblock {\em Nature}, 337(6203):129--132, 1989.

\bibitem{du2019implicit}
Yilun Du and Igor Mordatch.
\newblock Implicit generation and modeling with energy-based models.
\newblock In {\em Advances in Neural Information Processing Systems}, 32, 2019.

\bibitem{engel2001statistical}
Andreas Engel and Christian Van den Broeck.
\newblock {\em Statistical Mechanics of Learning}.
\newblock Cambridge University Press, 2001.

\bibitem{ernoult2019updates}
Maxence Ernoult, Julie Grollier, Damien Querlioz, Yoshua Bengio, and Benjamin Scellier.
\newblock Updates of equilibrium propagation match gradients of backpropagation through time in an RNN with static input.
\newblock In {\em Advances in Neural Information Processing Systems}, 32, 2019.

\bibitem{friston2010free}
Karl Friston.
\newblock The free-energy principle: a unified brain theory?
\newblock {\em Nature Reviews Neuroscience}, 11(2):127--138, 2010.

\bibitem{gerstner2002spiking}
Wulfram Gerstner and Werner M. Kistler.
\newblock {\em Spiking Neuron Models: Single Neurons, Populations, Plasticity}.
\newblock Cambridge University Press, 2002.

\bibitem{grossberg1987competitive}
Stephen Grossberg.
\newblock Competitive learning: From interactive activation to adaptive resonance.
\newblock {\em Cognitive Science}, 11(1):23--63, 1987.

\bibitem{he2016deep}
Kaiming He, Xiangyu Zhang, Shaoqing Ren, and Jian Sun.
\newblock Deep residual learning for image recognition.
\newblock In {\em IEEE Conference on Computer Vision and Pattern Recognition}, pages 770--778, 2016.

\bibitem{hebb1949organization}
Donald O. Hebb.
\newblock {\em The Organization of Behavior: A Neuropsychological Theory}.
\newblock Wiley, New York, 1949.

\bibitem{hinton2002training}
Geoffrey E. Hinton.
\newblock Training products of experts by minimizing contrastive divergence.
\newblock {\em Neural Computation}, 14(8):1771--1800, 2002.

\bibitem{hopfield1982neural}
John J. Hopfield.
\newblock Neural networks and physical systems with emergent collective computational abilities.
\newblock {\em Proceedings of the National Academy of Sciences}, 79(8):2554--2558, 1982.

\bibitem{jordan1999introduction}
Michael I. Jordan, Zoubin Ghahramani, Tommi S. Jaakkola, and Lawrence K. Saul.
\newblock An introduction to variational methods for graphical models.
\newblock {\em Machine Learning}, 37(2):183--233, 1999.

\bibitem{krizhevsky2012imagenet}
Alex Krizhevsky, Ilya Sutskever, and Geoffrey E. Hinton.
\newblock ImageNet classification with deep convolutional neural networks.
\newblock In {\em Advances in Neural Information Processing Systems}, 25, 2012.

\bibitem{laborieux2021scaling}
Axel Laborieux, Maxence Ernoult, Benjamin Scellier, Yoshua Bengio, Julie Grollier, and Damien Querlioz.
\newblock Scaling equilibrium propagation to deep convnets by drastically reducing its gradient estimator bias.
\newblock {\em Frontiers in Neuroscience}, 15:633674, 2021.

\bibitem{lecun2006tutorial}
Yann LeCun, Sumit Chopra, Raia Hadsell, Marc'Aurelio Ranzato, and Fu-Jie Huang.
\newblock A tutorial on energy-based learning.
\newblock In G. Bakir, T. Hofmann, B. Sch\"olkopf, A. J. Smola, and B. Taskar, editors, {\em Predicting Structured Data}. MIT Press, 2006.

\bibitem{lecun2015deep}
Yann LeCun, Yoshua Bengio, and Geoffrey Hinton.
\newblock Deep learning.
\newblock {\em Nature}, 521(7553):436--444, 2015.

\bibitem{lee2015difference}
Dong-Hyun Lee, Saizheng Zhang, Asja Fischer, and Yoshua Bengio.
\newblock Difference target propagation.
\newblock In {\em Machine Learning and Knowledge Discovery in Databases}, pages 498--515. Springer, 2015.

\bibitem{lillicrap2016random}
Timothy P. Lillicrap, Daniel Cownden, Douglas B. Tweed, and Colin J. Akerman.
\newblock Random synaptic feedback weights support error backpropagation for deep learning.
\newblock {\em Nature Communications}, 7:13276, 2016.

\bibitem{lillicrap2020backpropagation}
Timothy P. Lillicrap, Adam Santoro, Luke Marris, Colin J. Akerman, and Geoffrey Hinton.
\newblock Backpropagation and the brain.
\newblock {\em Nature Reviews Neuroscience}, 21(6):335--346, 2020.

\bibitem{mezard1987spin}
Marc M\'ezard, Giorgio Parisi, and Miguel A. Virasoro.
\newblock {\em Spin Glass Theory and Beyond}.
\newblock World Scientific, Singapore, 1987.

\bibitem{millidge2020predictive}
Beren Millidge, Alexander Tschantz, and Christopher L. Buckley.
\newblock Predictive coding approximates backprop along arbitrary computation graphs.
\newblock {\em arXiv preprint arXiv:2006.04182}, 2020.

\bibitem{movellan1991contrastive}
Javier R. Movellan.
\newblock Contrastive Hebbian learning in the continuous Hopfield model.
\newblock In D.~S. Touretzky, J.~L. Elman, T.~J. Sejnowski, and G.~E. Hinton, editors,
{\em Connectionist Models: Proceedings of the 1990 Summer School}, pages 10--17.
Morgan Kaufmann, San Mateo, CA, 1990.

\bibitem{nishimori2001statistical}
Hidetoshi Nishimori.
\newblock {\em Statistical Physics of Spin Glasses and Information Processing: An Introduction}.
\newblock Oxford University Press, 2001.

\bibitem{nokland2016direct}
Arild N{\o}kland.
\newblock Direct feedback alignment provides learning in deep neural networks.
\newblock In {\em Advances in Neural Information Processing Systems}, 29, 2016.

\bibitem{oconnor2019representations}
Peter O'Connor, Efstratios Gavves, and Max Welling.
\newblock Training a spiking neural network with equilibrium propagation.
\newblock In {\em Proceedings of the 22nd International Conference on Artificial Intelligence and Statistics}, pages 1516--1523, 2019.

\bibitem{oreilly1996biologically}
Randall C. O'Reilly.
\newblock Biologically plausible error-driven learning using local activation differences: The generalized recirculation algorithm.
\newblock {\em Neural Computation}, 8(5):895--938, 1996.

\bibitem{rumelhart1986learning}
David E. Rumelhart, Geoffrey E. Hinton, and Ronald J. Williams.
\newblock Learning internal representations by error propagation.
\newblock In D. E. Rumelhart and J. L. McClelland, editors, {\em Parallel Distributed Processing: Explorations in the Microstructure of Cognition, Volume 1}, pages 318--362. MIT Press, 1986.

\bibitem{scellier2017equilibrium}
Benjamin Scellier and Yoshua Bengio.
\newblock Equilibrium propagation: Bridging the gap between energy-based models and backpropagation.
\newblock {\em Frontiers in Computational Neuroscience}, 11:24, 2017.

\bibitem{scellier2018equilibrium}
Benjamin Scellier, Anirudh Goyal, Jonathan Binas, Thomas Mesnard, and Yoshua Bengio.
\newblock Generalization of equilibrium propagation to vector field dynamics.
\newblock {\em arXiv preprint arXiv:1808.04873}, 2018.

\bibitem{silver2016mastering}
David Silver, Aja Huang, Chris J. Maddison, Arthur Guez, Laurent Sifre, George van den Driessche, Julian Schrittwieser, and colleagues.
\newblock Mastering the game of Go with deep neural networks and tree search.
\newblock {\em Nature}, 529(7587):484--489, 2016.

\bibitem{tishby2000information}
Naftali Tishby, Fernando C. Pereira, and William Bialek.
\newblock The information bottleneck method.
\newblock {\em arXiv preprint physics/0004057}, 2000.

\bibitem{urbanczik2014learning}
Robert Urbanczik and Walter Senn.
\newblock Learning by the dendritic prediction of somatic spiking.
\newblock {\em Neuron}, 81(3):521--528, 2014.

\bibitem{wainwright2008graphical}
Martin J. Wainwright and Michael I. Jordan.
\newblock Graphical models, exponential families, and variational inference.
\newblock {\em Foundations and Trends in Machine Learning}, 1(1--2):1--305, 2008.

\bibitem{werbos1974beyond}
Paul J. Werbos.
\newblock Beyond regression: New tools for prediction and analysis in the behavioral sciences.
\newblock PhD thesis, Harvard University, 1974.

\bibitem{whittington2017approximation}
James C. R. Whittington and Rafal Bogacz.
\newblock An approximation of the error backpropagation algorithm in a predictive coding network with local hebbian synaptic plasticity.
\newblock {\em Neural Computation}, 29(5):1229--1262, 2017.

\bibitem{whittington2019theories}
James C. R. Whittington and Rafal Bogacz.
\newblock Theories of error back-propagation in the brain.
\newblock {\em Trends in Cognitive Sciences}, 23(3):235--250, 2019.

\bibitem{xie2003equivalence}
Xiaohui Xie and H. Sebastian Seung.
\newblock Equivalence of backpropagation and contrastive Hebbian learning in a layered network.
\newblock {\em Neural Computation}, 15(2):441--454, 2003.

\end{thebibliography}

\newpage
\appendix

\section{Leibniz Integral Rule}\label{app:leibniz}

The proofs in Section \ref{sec:3} rely on the ability to interchange the order of integration and differentiation. This is justified by the Leibniz integral rule (or differentiation under the integral sign). A general version states:

\begin{theorem}[Leibniz Integral Rule]
Let $\Omega$ be an open set in $\R^p$, and let $f(s, \theta)$ be a function defined on $\mathcal{S} \times \Omega$. Assume that for all $\theta \in \Omega$, $f(s, \theta)$ is an integrable function of $s$. Assume that for almost every $s \in \mathcal{S}$, the partial derivative $\frac{\partial f}{\partial \theta_i}$ exists for all $\theta \in \Omega$. If there exists an integrable function $g(s)$ such that for all $\theta \in \Omega$, $|\frac{\partial f}{\partial \theta_i}(s, \theta)| \le g(s)$ for almost every $s \in \mathcal{S}$, then for all $\theta \in \Omega$:
\[ \frac{\partial}{\partial \theta_i} \int_{\mathcal{S}} f(s, \theta) \dd s = \int_{\mathcal{S}} \frac{\partial f}{\partial \theta_i}(s, \theta) \dd s. \]
\end{theorem}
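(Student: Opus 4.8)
The plan is to reduce the multivariate statement to a one-dimensional differentiation and then invoke the Dominated Convergence Theorem (DCT). Fix $\theta \in \Omega$ and the coordinate index $i$, and write $I(\theta) \defeq \int_{\mathcal{S}} f(s, \theta) \dd s$, which is finite by the integrability hypothesis. Since $\Omega$ is open, there is $\delta > 0$ such that $\theta + h e_i \in \Omega$ for all $|h| < \delta$, where $e_i$ is the $i$-th standard basis vector; the entire segment joining $\theta$ and $\theta + h e_i$ then also lies in $\Omega$ for such $h$. By definition of the partial derivative, $\frac{\partial I}{\partial \theta_i}(\theta) = \lim_{h \to 0} \frac{I(\theta + h e_i) - I(\theta)}{h}$, and by linearity of the integral this difference quotient equals $\int_{\mathcal{S}} \phi_h(s) \dd s$, where $\phi_h(s) \defeq \frac{f(s, \theta + h e_i) - f(s, \theta)}{h}$.

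Next I would convert the continuous limit $h \to 0$ into a statement about sequences so that DCT applies. It suffices to show that for every sequence $h_n \to 0$ with $0 < |h_n| < \delta$, the integrals $\int_{\mathcal{S}} \phi_{h_n}(s) \dd s$ converge to $\int_{\mathcal{S}} \frac{\partial f}{\partial \theta_i}(s, \theta) \dd s$; since this limiting value is independent of the chosen sequence, the limit in $h$ exists and equals it. For almost every $s$, the hypothesis that $\frac{\partial f}{\partial \theta_i}(s, \theta)$ exists furnishes pointwise convergence $\phi_{h_n}(s) \to \frac{\partial f}{\partial \theta_i}(s, \theta)$, and each $\phi_{h_n}$ is measurable as a scalar multiple of a difference of integrable functions.

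The key step is to produce an integrable dominating function independent of $n$. For this I would apply the Mean Value Theorem to the scalar map $t \mapsto f(s, \theta + t e_i)$ on the interval between $0$ and $h_n$ (continuous and differentiable there because the partial derivative exists along the whole segment): for almost every $s$ there is $\tau = \tau(s, n)$ strictly between $0$ and $h_n$ with $\phi_{h_n}(s) = \frac{\partial f}{\partial \theta_i}(s, \theta + \tau e_i)$. Because $|\tau| < |h_n| < \delta$, the point $\theta + \tau e_i$ lies in $\Omega$, so the domination hypothesis yields $|\phi_{h_n}(s)| \le g(s)$ for almost every $s$ and every $n$. With the integrable majorant $g$ in hand, DCT permits interchanging limit and integral, giving $\lim_{n} \int_{\mathcal{S}} \phi_{h_n}(s)\dd s = \int_{\mathcal{S}} \frac{\partial f}{\partial \theta_i}(s,\theta)\dd s$, which is the claim.

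I expect the main obstacle to be the uniform domination of the difference quotients rather than the limit interchange itself: DCT becomes available only once a single integrable bound controls all of the $\phi_{h_n}$ simultaneously, and securing that bound is exactly what forces the joint use of the Mean Value Theorem and the openness of $\Omega$ (the latter guaranteeing that the intermediate evaluation point $\theta + \tau e_i$ remains in the region where the derivative bound $g$ is assumed to hold). The full-gradient form of the rule used throughout \autoref{sec:3} then follows by applying this one-coordinate result to each component $\theta_i$ in turn.
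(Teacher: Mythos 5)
The paper never actually proves this statement: the appendix presents it as a citation of the classical differentiation-under-the-integral-sign theorem and only remarks on how it applies to the Gibbs factor $\exp(-F/T)$. So there is no in-paper argument to compare against; what you have written is the standard textbook proof (difference quotients, sequential reduction to a.e.\ pointwise limits, Mean Value Theorem for domination, then Dominated Convergence), and in its essentials it is the correct and canonical argument --- it is, for instance, exactly the proof of Theorem 2.27 in Folland's \emph{Real Analysis}.

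One quantifier subtlety deserves attention, because it interacts with the precise wording of the statement you were asked to prove. The hypothesis reads: for all $\theta \in \Omega$, $|\partial f/\partial\theta_i(s,\theta)| \le g(s)$ for almost every $s$ --- so the exceptional null set $N_\theta$ is allowed to depend on $\theta$. In your MVT step you invoke this bound at the intermediate point $\theta + \tau(s,n)e_i$, where $\tau(s,n)$ depends on $s$. The hypothesis guarantees $|\partial f/\partial\theta_i(s,\vartheta)| \le g(s)$ for $s \notin N_\vartheta$ with $\vartheta$ fixed \emph{in advance}; it does not by itself license the bound when the parameter point varies with $s$, since the union of the null sets $N_\vartheta$ over the uncountably many possible intermediate points need not be null. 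Your proof is airtight under the standard (and surely intended) reading in which the domination holds for all $(s,\theta)$, or outside a single $\theta$-independent null set --- this is precisely how Folland phrases the hypothesis so that the MVT step goes through. Under the literal $\theta$-dependent reading, the domination step needs a patch: e.g., a Fubini argument along the segment showing that for a.e.\ $s$ one has $|\partial f/\partial\theta_i(s,\theta+te_i)| \le g(s)$ for a.e.\ $t$, combined with the fact that an everywhere-differentiable function of one variable whose derivative is a.e.\ bounded by a constant is Lipschitz with that constant; this again yields $|\phi_h(s)| \le g(s)$ a.e., and the rest of your argument proceeds unchanged. I would therefore call your proof correct, but you should state explicitly that you read the dominating inequality as holding outside one null set independent of $\theta$, which is also the reading the paper needs for its own applications.
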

In our context, $f(s, \theta)$ is the Gibbs factor $\exp(-F/T)$, and Assumption \ref{as:regularity} requires that its derivative with respect to $\theta_i$ is dominated by a function $g(s)$ that is integrable over $\mathcal{S}$. This is satisfied by a wide range of well-behaved energy functions used in practice.

\end{document}